\title[Learnability of TVGAM ]{Statistical Learnability of Generalized Additive Models \\based on Total Variation Regularization}
\begin{document}

\maketitle
\newcommand{\nnbr}[1]{{\rm TV}\!\rbr{#1}}
\newcommand{\ex}{\vec{x}}
\newcommand{\range}[2]{[#1,#2]}

\begin{abstract}
A generalized additive model (GAM,~\cite{HasTib87}) is a nonparametric model by the sum of univariate functions with respect to
each explanatory variable, i.e., $f(\ex) = \sum f_j(x_j)$, where $x_j\in\RR$ is $j$-th component of a sample $\ex\in \RR^p$.
In this paper, we introduce the total variation (TV) of a function as a measure of the complexity of
functions in $L^1_{\rm c}(\RR)$-space.
Our analysis shows that a GAM based on TV-regularization 
exhibits a Rademacher complexity of $O(\sqrt{\nicefrac{\log p}{m}})$, which is tight in terms of both $m$ and $p$ in the agnostic case of the classification problem.
In result, we obtain generalization error bounds for finite samples according to work by \citet{Bartlett02}.
\end{abstract}

\begin{keywords}
Generalized additive models, Generalization bound, Total variation, Rademacher complexity, Gaussian complexity
\end{keywords}
\section{Introduction}
In this paper, 
we focus on the learning problem of the following form of prediction functions:
\begin{align}\label{eq:GAM}
f(\ex) = \sum_{j\in\range1p} f_j(x_j),
\end{align}
where $\ex\in\RR^p$ denotes a sample and $x_j\in\RR$ denotes the
$j$-th explanatory variable for each $j\in\range1p\triangleq\cbr{j\in \NN|1\le j\le p}$.
This was first proposed by \cite{HasTib87} and is known as a \emph{generalized additive model} (GAM).
In this paper, we call $f_j(\cdot)$ a \emph{weight function} and $f(\cdot)$ a \emph{GAM predictor}.
This not only includes linear predictors but also captures nonlinear 
relationships between explanatory variables and the targeted values. Although 
complex interactions or dependencies among explanatory values are not expressed,
GAM predictors are expected to exhibit higher predictive performance
when properly learned from a sufficiently large amount of data, at least in comparison with simple linear models. There has already been substantial work on data mining and statistics using GAMs~\citep{GuiEdwHas02,Wood06}. Learning GAM is mostly conducted by regularized empirical risk minimization, in which its regularization is based on the wiggliness of weight functions and is reduced to fitting natural cubic splines~\citep{Wood06,Friedman01}. 

We consider a totally different situation in this paper, where the \emph{total variation} (TV) of a function is employed as a measure of complexity of functions in $L^1_{\rm c}(\RR)$-space. Here, $L^1_{\rm c}(\RR)$ denotes a space of functions with compact support in $L^1$-space on $\RR$.
We first introduce the definition of TV and a class of GAM predictors
regularized by the sum of TV among all weight functions:
\begin{definition}
 For $f\in L^1_{\rm c}(\RR)$, total variation of $f$ denoted by $\nnbr{f}$ is defined as follows
\begin{align*}
\nnbr{f(\cdot)} \triangleq \sup_{\substack{x: \NN\to\RR, \, {\rm increasing}}} \sum_{n\in\NN} \abr{f(x(n)) - f(x(n+1))}.
\end{align*}
\end{definition}
\begin{definition}
  For $C\in\RR_+$ and $p\in\NN$, a set of TV-regularized GAM predictors denoted by ${\rm GAM}_p(C)$ is defined as follows: 
  \begin{align}
  {\rm GAM}_p(C) \triangleq \cbr{f \in L_{\rm c}^1(\RR^p) \middle| f(\ex) = \sum_{j\in\range1p} f_j(x_{j}),  \sum_{j\in\range1p} \nnbr{f_j} \le C}.
  \end{align}
\end{definition}
As we discuss in the next section, it has several desirable properties as a measure of complexity of functions under the framework of regularized empirical risk minimization.
The main theorem of this paper (Theorem~\ref{thm:main}) states that the empirical Rademacher complexity of ${\rm GAM}_p(C)$ has an order of $O(C\sqrt{\nicefrac{\log p}{m}})$. In result, we obtain generalization error bounds for finite samples according to work by \citet{Bartlett02}.

The main theorem is shown by analysis of the empirical Gaussian complexity without using concentration inequalities 
but with basic inequalities known in the field of stochastic process.
This result implies that even discontinuous functions are learnable in GAM based on TV-regularization.
For the paper to be self-contained, we state the definition of the empirical Rademacher complexity and the empirical Gaussian complexity below:
\begin{definition}
  For a set of functions $F\subset L^1(X)$ and $(\ex_i)_{i\in \range1m} \in X^m$,
the empirical Gaussian complexity of $F$ (with respect to $\ex^m$) denoted by $G(F,(\ex_i)_{i\in \range1m})$ is defined as follows:
 \begin{align}
  G(F,(\ex_i)_{i\in \range1m}) \triangleq \frac 1m \EE_{\gammab} \sup_{f\in F} \abr{\sum_{i\in\range1m} \gamma_i f(\ex_i)},
 \end{align}
 where each component of $\gammab = (\gamma_i)_{i\in\range1m}$ is an independent standard Gaussian random variable.
Similarly, the empirical Rademacher complexity denoted by $R(F,(\ex_i)_{i\in \range1m})$ is defined as follows
 \begin{align}
   R(F,(\ex_i)_{i\in \range1m}) \triangleq \frac 1m \EE_{\epsilonb} \sup_{f\in F} \abr{\sum_{i\in\range1m} \varepsilon_i f(\ex_i)}.
 \end{align}
 where each component of $\epsilonb=(\varepsilon_i)_{i\in\range1m}$ is an independent Rademacher random
 variable\footnote{A Rademacher random variable refers to a Bernoulli
   random variable of $\cbr{\pm 1}$ with the parameter $\half$.}.
\end{definition}

In Section 2, we introduce several properties of the TV-regularization. In Section 3, we show a technical lemma that will be used to prove the main theorem, which is formally stated and proven in Section 4. We conclude
our paper with some discussion on related work and the tightness of our bound in Section 5.
\section{Property of Total Variation (TV)}
The formal expression of the TV-regularized empirical risk minimization for a GAM is given as follows: 
\begin{align} \notag
&{\rm minimize}_{ f, f_j } &&\sum_{i\in\range1m} \ell \rbr{f(\ex_{i}), y_i } + \lambda \sum_{j\in\range1p} \nnbr{f_j(\cdot)} \nonumber \\
&{\rm subject\ to } && 
f(\ex) = \sum_{j\in\range1p} f_j(x_j),\, f_j(\cdot) \in L^1_c(\RR).   
\label{eq:FLR0}
\end{align}
First, we state a type of compatibility of 
TV of weight functions to $L^1$-norm of a weight vector of linear predictors.
\begin{property}
If $\omega$ is differentiable, then it holds that
\begin{align}
\nnbr{\omega} = \nbr{\omega'}_1 = \int_{\RR} \abr{\omega'(x)}{\rm d}x.
\end{align}
\end{property}
When we further restrict weight functions 
to be in the form of $f_j(x_j)=w_jx_j[\![-M\le x_j \le M]\!]$\footnote{$[\![ \bullet ]\!]$ is a function that returns 1 if $\bullet$ is true and 0 otherwise.} where $M>0$, the TV of $f_j$ coincides with the $L^1$-norm of $(w_j)_{j\in\range1p}$ and hence
the problem \eqref{eq:FLR0} is reduced to problems such as $L1$-logistic regression and LASSO~\citep{Tib96,Tib97}. 

Second, we state that TV has a type of invariance and
this property leads to a desirable property of a solution of the problem~\eqref{eq:FLR0}.
\begin{property}
Given a strictly monotonic function $\varphi:\RR\to\RR$, 
it holds that 
\begin{align}
\nnbr{\omega(\cdot)} = \nnbr{\omega\circ\varphi(\cdot)}.
\end{align}
\end{property}
\begin{proof}
$\nnbr{\omega}$ can be seen as the total variation of the signed measure $\mu(A)=\int_A\omega(x){\rm d}x$ and the total variation of measures is invariant under bijective continuous transformations~\citep{Gyo02}.
\end{proof}
We note that this can be easily confirmed in the case where both $\omega$ and $\varphi$ are differentiable as follows:
\begin{align*}
\int_{\RR} \abr{\frac{{\rm d}\omega\circ\varphi(x)}{{\rm d}x}}{\rm d}x =
\int_{\RR} \abr{\frac{{\rm d}\omega\circ\varphi(x)}{{\rm d}\varphi(x)}\frac{{\rm d}\varphi(x)}{{\rm d}x}}{\rm d}x =
\int_{\RR} \abr{\frac{{\rm d}\omega\circ\varphi(x)}{{\rm d}\varphi(x)}}\frac{{\rm d}\varphi(x)}{{\rm d}x}{\rm d}x =
\int_{\RR}  \abr{\omega'(\varphi)}{\rm d}\varphi.
\end{align*}
\begin{property}
Given a strictly monotonic function $\varphi_j:\RR\to\RR$ 
for each $j\in\range1p$, consider modified samples
\begin{align}
(\hat{\ex}_i,y_i)_{i\in\range1m} \triangleq (\varphib\rbr{\ex_i},y_i)_{i\in\range1m} ,
\end{align}
where $\varphib\rbr{\ex} = (\varphi_j(x_j))_{j\in\range1p}$.
Then for any minimizer $f^*$ in \eqref{eq:FLR0} with respect to $(\hat{\ex}_i,y_i)_{i\in\range1m}$,  
$f^*\circ\varphib$ achieves the minimum of \eqref{eq:FLR0} with respect to $(\ex_i,y_i)_{i\in\range1m}$.
\end{property}
\begin{proof}
For any $f$ in \eqref{eq:FLR0} with respect to $(\hat{\ex}_i,y_i)_{i\in\range1m}$,  
$f\circ\varphib$ yields the same value of objective function in \eqref{eq:FLR0} with respect to $(\ex_i,y_i)_{i\in\range1m}$. It can be seen from $f\circ\varphib = \sum_{j\in\range1p} f_j\circ\varphi_j$ and 
\begin{align*}
\sum_{i\in\range1m} \ell \rbr{f(\hat{\ex}_i), y_i } + \lambda \sum_{j\in\range1p} \nnbr{f_j(\cdot)}
=\sum_{i\in\range1m} \ell \rbr{f\circ\varphib(\ex_{i}), y_i } + \lambda \sum_{j\in\range1p} \nnbr{f_j\circ\varphi_j(\cdot)}.
\end{align*}
\end{proof}
This property indicates that training based on TV-regularization is invariant under transformations of 
the explanatory variables such as scaling, shifting and even nonlinear transformations by monotonic functions. In the sense that we can obtain an optimal predictor among such  transformation without any prior knowledge, this property is very important from a practical point of view.

Third, we see that training based on TV-regularization is reduced to a minimization problem with a finite number of parameters.
\begin{property}
For a minimization problem defined in \eqref{eq:FLR0}, consider the following minimization problem:
\begin{align}
\label{eq:FLR_L1}
{ \rm minimize}_{w_{j,s,t}}
\sum_{i\in\range1m}\ell\rbr{\sum_{(j,s,t)\in J} 2w_{j,s,t} \phi_{j,s,t}(\ex_i),y_i}+ 2\lambda\sum_{(j,s,t)\in J}\abr{w_{j,s,t}},
\end{align}
where $J= \cbr{(j,s,t)\in\range 1p\times \range1m\times\range1m\middle| s \le t}$ and
\begin{align}
\phi_{j,s,t}(\ex) \triangleq \frac 12[\![ x_{i(s)j}\le x_{j} < x_{i(t+1)j}  ]\!].
\end{align}
Here, $i(t)$ denotes the index of
a sample at which the $j$-th explanatory 
variable is the $t$-th smallest among $(x_{ij})_{i\in[1,m]}$.
Let $(w^*_{j,s,t})_{(j,s,t)\in J}$ be any minimizer of \eqref{eq:FLR_L1}.
Then, the minimum value of \eqref{eq:FLR0}
is achieved by $f^*$ where
\begin{align}
f^*(\ex) = \sum_{(j,s,t)\in J} 2w^*_{j,s,t} \phi_{j,s,t}(\ex).
\end{align}
\end{property}
\begin{proof}
In \eqref{eq:FLR0}, the first term of 
the objective function only depends on function values at observed samples.
Meanwhile, while conditioning $v_{ij} = f_j(x_{ij})$ for $i\in\range1m$, the problem of finding 
$f_j(\cdot)$ that minimizes TV and its minimum 
value can be analytically solved.
The minimum is, for instance, achieved by the following function:
\begin{align}\label{eq:f_jbyv_jt}
f_j(\cdot)  = \sum_{t\in\range1{m+1}} 2v_{i(t)j} \phi_{j,t-1,t}(\cdot).
\end{align}
Note that we defined exceptionally that $x_{i(0)j} \triangleq -M$ and $x_{i(m+1)j} \triangleq M$ for
sufficiently large $M$ and that $i(\cdot)$ implicitly depends on $j$.
The optimality follows directly from the definition.
Its total variation is expressed as
\begin{align}\label{eq:fusedlasso}
|v_{i(0)j}|+\!\!\sum_{t\in\range1{m-1}}|v_{i(t)j}-v_{i(t+1)j}|+|v_{i(m)j}| .
\end{align}
Moreover, we can see that there exists $(w_{j,s,t})_{j\in J}$
which satisfies
\begin{align}\label{eq:fusedlasso}
&\sum_{(j,s,t)\in J} 2w_{j,s,t} \phi_{j,s,t}(\cdot) = \sum_{j\in\range1p}\sum_{t\in\range1m} 2v_{i(t),j} \phi_{j,t-1,t}(\cdot)
\quad {\rm and}\\
&2\sum_{(j,s,t)\in J} |w_{j,s,t}| = \sum_{j\in\range1p} \rbr{|v_{i(0)j}|+\!\!\sum_{t\in[1,m-1]}|v_{i(t)j}-v_{i(t+1)j}|+|v_{i(n)j}| }.
\end{align}
This technical lemma is proved in Lemma~\ref{lemma:main}.
Substituting these equations, we obtain \eqref{eq:FLR_L1}. 
\end{proof}

From this property, 
we can solve \eqref{eq:FLR_L1} to find 
the solution of \eqref{eq:FLR0}.
 Therefore, when $\ell(\cdot,y)$ is convex for any $y\in Y$,
it is boiled down to a convex minimization problem.
Furthermore, as the second term has a separable structure, the coordinate-wise stationary condition guarantees the global solution when $\ell(\cdot,y)$ is also smooth.
In this case, not only \eqref{eq:FLR0} is
boiled down to a minimization problem with a finite number of parameters, but it can also be solved computationally efficiently.

Lastly, we state a property used in the proof of Lemma~\ref{lemma:main}.
\begin{property}
  For any $x< x'$, let $\phi_{x,x'}(\cdot) = \half [\![ x \le \cdot < x' ]\!]$. Then $\nnbr{\phi_{x,x'}} = 1$.
\end{property}

\section{Technical Lemma}
\newcommand{\prodle}{\circ}
In this section, before the main theorem (Theorem~\ref{thm:main}) regarding the empirical Rademacher complexity of ${\rm GAM}_p(C)$, we prove a lemma (Lemma~\ref{lemma:ineq}) used in its proof. Here, we define $A\prodle B \triangleq\cbr{(a,b)\in A\times B|a\le b}$ for $A,B\subset \NN$.
\begin{lemma} \label{lemma:ineq}
  Let $\gamma_1,\gamma_2,\ldots,\gamma_m$ and  $x_1 \le x_2 \le \ldots \le x_m$ be all real numbers.
  Then we have:
  \begin{align}
  2\sup_{f\in {\rm GAM}_1(1)} \sum \gamma_i f(x_i) \le \max_{i\in\range 0m}\Gamma_i  - \min_{i\in\range 0m} \Gamma_i,
  \end{align}
  where $\Gamma_i \triangleq \sum_{j\in \range1i} \gamma_j$.
\end{lemma}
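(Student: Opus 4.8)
The plan is to exploit that for $p=1$ the set ${\rm GAM}_1(1)$ is just $\cbr{f\in L^1_{\rm c}(\RR):\nnbr{f}\le 1}$, and that $\sum_i\gamma_i f(x_i)$ depends on $f$ only through the finite vector of values $v_i\triangleq f(x_i)$. I would therefore replace the supremum over functions by a supremum over admissible value vectors $(v_1,\dots,v_m)$, with objective $\sum_i\gamma_i v_i$ subject only to whatever the bound $\nnbr{f}\le 1$ forces on these values.

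The first substantive step is to lower bound the total variation by the variation read off at the sample points, retaining the two boundary terms forced by compact support. Since $f\in L^1_{\rm c}(\RR)$ vanishes outside a bounded set, there exist $a<x_1$ and $b>x_m$ with $f(a)=f(b)=0$; feeding the increasing sequence $a<x_1\le\cdots\le x_m<b$ into the supremum defining $\nnbr{f}$ yields $\nnbr{f}\ge |v_1|+\sum_{i\in\range{1}{m-1}}|v_{i+1}-v_i|+|v_m|$. (This is exactly the quantity minimized in the earlier analytic computation of the TV-optimal interpolant, but here only the lower bound is needed.) Hence every admissible $(v_i)$ lies in the polytope $|v_1|+\sum_i|v_{i+1}-v_i|+|v_m|\le 1$, and maximizing $\sum_i\gamma_i v_i$ over this larger set can only strengthen the upper bound.

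Next I would pass to the increments $d_0\triangleq v_1$, $d_i\triangleq v_{i+1}-v_i$ for $i\in\range{1}{m-1}$, and $d_m\triangleq -v_m$, so the constraint reads $\sum_{j\in\range{0}{m}}|d_j|\le 1$ together with the telescoping identity $\sum_{j\in\range{0}{m}}d_j=0$. Using $v_i=\sum_{j<i}d_j$ and summation by parts, the objective becomes $\sum_i\gamma_i v_i=\sum_{j\in\range{0}{m}}d_j(\Gamma_m-\Gamma_j)$, where the index $j=m$ is added for free since $\Gamma_m-\Gamma_m=0$.

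It then remains to solve $\max\cbr{\sum_j d_j c_j:\sum_j|d_j|\le 1,\ \sum_j d_j=0}$ with $c_j\triangleq\Gamma_m-\Gamma_j$. The decisive trick is that $\sum_j d_j=0$ lets me subtract any constant $t$ from every $c_j$ without changing the objective; taking the midpoint $t=\frac12(\max_j c_j+\min_j c_j)$ gives $\sum_j d_j c_j=\sum_j d_j(c_j-t)\le \max_j|c_j-t|=\frac12(\max_j c_j-\min_j c_j)$, using $\sum_j|d_j|\le 1$. Substituting $c_j=\Gamma_m-\Gamma_j$ collapses $\max_j c_j-\min_j c_j$ to $\max_{j\in\range{0}{m}}\Gamma_j-\min_{j\in\range{0}{m}}\Gamma_j$ (the $\Gamma_m$ cancels, and $j=0$ with $\Gamma_0=0$ is retained), and multiplying by $2$ gives the claim. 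I expect the main obstacle to be the bookkeeping of the compact-support boundary terms: without $|v_1|$ and $|v_m|$ the value vector could be shifted by a constant, making $\sum_i\gamma_i v_i$ unbounded whenever $\Gamma_m\neq 0$. It is exactly those endpoint terms --- equivalently the closure $\sum_j d_j=0$ and the retention of $j=0$ with $\Gamma_0=0$ --- that both bound the supremum and produce the symmetric range $\max_j\Gamma_j-\min_j\Gamma_j$ rather than an asymmetric $\max_j\Gamma_j$ or $\max_j|\Gamma_j|$.
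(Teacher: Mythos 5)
Your proof is correct, and it takes a genuinely different route from the paper's. The paper obtains Lemma~\ref{lemma:ineq} as a corollary of Lemma~\ref{lemma:main}: it converts the supremum over $f\in{\rm GAM}_1(1)$ into the linear program $\sup\{\sum_{i\le j}\Gamma_{ij}w_{ij}:\ 2\sum_{i\le j}|w_{ij}|\le 1\}$ via an interval-decomposition representation whose hard direction ($2\to3$) is a lengthy induction, evaluates that program as $\tfrac12\max_{i\le j}|\Gamma_{ij}|$, and bounds this by $\tfrac12(\max_i\Gamma_i-\min_i\Gamma_i)$. You use only the easy half of that machinery --- the sampling bound ${\rm TV}(f)\ge |v_1|+\sum_{i}|v_{i+1}-v_i|+|v_m|$, which is exactly the paper's step $1\to2$ --- and then bypass the induction entirely: passing to increments $d_j$ with $\sum_j|d_j|\le 1$ and the closure identity $\sum_j d_j=0$, you exploit the latter to recenter the coefficients at the midpoint $t=\tfrac12(\max_j c_j+\min_j c_j)$ and conclude by H\"older. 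This is shorter and self-contained; what it gives up is the exact characterization (the paper's Lemma~\ref{lemma:main} yields equality and achievability of the supremum, and the same construction is reused in Property 4 to reduce the learning problem~\eqref{eq:FLR0} to a finite-dimensional $L^1$-regularized problem), whereas your relaxation delivers only the upper bound --- which is all that Lemma~\ref{lemma:ineq} asserts and all that Theorem~\ref{thm:main} needs. Two pedantic points, which affect the paper's own proof equally: when there are ties $x_i=x_{i+1}$ your test sequence is not strictly increasing, but the duplicated points contribute zero and can be dropped; and since the defining supremum of ${\rm TV}$ ranges over infinite increasing sequences, one should extend $a<x_1\le\cdots\le x_m<b$ by further points beyond the support of $f$, which again contribute zero.
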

\begin{proof}
Using Lemma~\ref{lemma:main}, the value of $\sup_{f\in {\rm GAM}_1(1)} \sum \gamma_i f(x_i)$ can be expressed as follows:
\begin{align}
  \sup_{f\in {\rm GAM}_1(1)} \sum \gamma_i f(x_i)  = \sup\cbr{\sum_{(i,j)\in \range1m\prodle\range1m} \Gamma_{ij}  w_{ij} : \sum_{(i,j)\in \range1m\prodle\range1m} |w_{ij}| \le \half },
\end{align}
where $\Gamma_{ij} \triangleq \sum_{ i' \in \range ij} \gamma_{i'}$.
Obviously, this value is obtained by $\max_{(i,j)\in \range1m\prodle\range1m} \half |\Gamma_{ij}|$.
Therefore, as we can easily see
$\half \max_{(i,j)\in\range1m\prodle\range1m} |\Gamma_{ij}| \le \half \rbr{\max_{i\in\range 0m}\Gamma_i  - \min_{i\in\range 0m} \Gamma_i }$,
the claim of the lemma holds true.
\end{proof}
In what follows, we now state and prove lemma~\ref{lemma:main} used in the above.
\begin{lemma}\label{lemma:main}
  Let $\gamma_1,\gamma_2,\ldots,\gamma_m$ and  $x_1 \le x_2 \le \ldots \le x_m$ be all real numbers.
  Then, the following statements are all equivalent:
  \begin{enumerate}[leftmargin=17pt]
    \item there exists $f\in {\rm GAM}_1(1)$ such that $\sum_{i\in\range1m} \gamma_i f(x_i) = \alpha$.
    \item there exists $(v_i)_{i \in \range1m}$ such that 
    \begin{align}\label{eq:v_tv}
    |v_1| + \sum_{i\in \range1{m-1}} {|v_i -v_{i+1}|} + |v_m| \le 1
    \end{align} and $\sum_{i\in \range1m} \gamma_i v_i = \alpha$.
    \item there exists $(w_{ij})_{(i,j)\in\range1m\prodle\range1m}$ such that
      \begin{align}
        &2\sum_{(i,j)\in \range1m\prodle\range1m} |w_{ij} | \le 1\label{eq:w_tv}, \quad {\rm and}\\
        &\sum_{(i,j)\in \range1m\prodle\range1m} \Gamma_{ij} w_{ij} = \alpha \label{eq:w_gs} ,
      \end{align}
      where $\Gamma_{ij} = \sum_{ i' \in \range ij} \gamma_{i'}$.
  \end{enumerate}  
\end{lemma}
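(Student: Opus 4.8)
The plan is to prove the cycle of implications $(1)\Rightarrow(2)\Rightarrow(3)\Rightarrow(1)$. Since each statement asserts that the \emph{same} number $\alpha$ is attainable subject to a budget constraint, proving the cycle shows that three ways of measuring the cost of realizing $\alpha$ coincide: the total variation of a predictor, the discrete total variation \eqref{eq:v_tv} of its sampled values, and twice the $\ell_1$-norm \eqref{eq:w_tv} of a family of interval weights. I assume throughout that the $x_i$ are distinct; if some coincide the values $f(x_i)$ are forced equal, so one first merges tied indices and replaces the corresponding $\gamma_i$ by their sum. The dictionary relating the three encodings is that $f$ is pinned down at the samples by $v_r=f(x_r)$, and these values are synthesized from the elementary boxes $\phi_{s,t}(\cdot)=\half\sembrack{x_s\le\cdot<x_{t+1}}$ of the previous section (with $x_{m+1}$ a point beyond the data), which satisfy $\phi_{s,t}(x_r)=\half\sembrack{s\le r\le t}$ and $\nnbr{\phi_{s,t}}=1$.

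First I would dispatch the two ``synthesis/restriction'' directions. For $(1)\Rightarrow(2)$ I set $v_r=f(x_r)$, so $\sum_r\gamma_r v_r=\alpha$ for free; to obtain \eqref{eq:v_tv} I plug into the definition of $\nnbr{f}$ the increasing test sequence made of the distinct $x_r$ together with one point to the left and one to the right of $\mathrm{supp}\,f$, where $f$ vanishes by compact support, and observe that the resulting sum is exactly the left-hand side of \eqref{eq:v_tv}, hence $\le\nnbr{f}\le1$. For $(3)\Rightarrow(1)$ I set $f=\sum_{(s,t)}2w_{st}\phi_{s,t}$; subadditivity and positive homogeneity of TV together with $\nnbr{\phi_{s,t}}=1$ give $\nnbr{f}\le2\sum_{s,t}|w_{st}|\le1$, so $f\in{\rm GAM}_1(1)$, while $f(x_r)=\sum_{s\le r\le t}w_{st}$ yields $\sum_r\gamma_r f(x_r)=\sum_{s,t}\Gamma_{st}w_{st}=\alpha$.

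The hard part will be $(2)\Rightarrow(3)$, precisely because it must produce the exact factor $2$ between the two budgets. I record the increments $d_0\triangleq v_1$, $d_r\triangleq v_{r+1}-v_r$ for $r\in\range1{m-1}$, and $d_m\triangleq-v_m$, so that $\sum_r d_r=0$ and $\sum_r|d_r|$ equals the left-hand side of \eqref{eq:v_tv}. The pointwise identity $v_r=\sum_{s\le r\le t}w_{st}$, which is what forces $\sum_r\gamma_r v_r=\sum_{s,t}\Gamma_{st}w_{st}$ and hence \eqref{eq:w_gs}, is equivalent to expressing the jump vector as $d=\sum_{s,t}w_{st}(\evec_{s-1}-\evec_t)$, i.e.\ through the dipoles $\evec_a-\evec_b$ with $0\le a<b\le m$. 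Minimizing $\sum_{s,t}|w_{st}|$ over such representations is a unit-cost transportation problem: each dipole touches two coordinates with opposite signs, so the triangle inequality gives $\sum_r|d_r|\le2\sum_{s,t}|w_{st}|$ for every representation, and since the weights $w_{st}$ may be of either sign, any positive increment of $d$ can be cancelled against any negative one, irrespective of their relative order, at unit cost, so this bound is attained. Choosing such a minimizer gives $2\sum_{s,t}|w_{st}|=\sum_r|d_r|\le1$, which is \eqref{eq:w_tv}. The one genuine obstacle is this transportation computation, and the reason the factor $2$ comes out exactly is the availability of signed interval weights, which lets increments be matched in either direction at unit cost.
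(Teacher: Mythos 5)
Your proposal is correct, and the two easy directions coincide with the paper's: $(1)\Rightarrow(2)$ is the same restriction-to-samples argument, and $(3)\Rightarrow(1)$ is the same synthesis $f=\sum 2w_{ij}\phi_{i,j}$ using subadditivity of TV and $\nnbr{\phi_{i,j}}=1$. Where you genuinely depart from the paper is the core step $(2)\Rightarrow(3)$. The paper proves it by an explicit induction on $m$: it picks the smallest index $i^\star$ attaining $\max_i v_i$, applies the inductive hypothesis to the remaining $m-1$ coordinates, and extends the resulting $(w'_{ij})$ to a full system, with a two-case analysis ($v_{i^\star+1}<v_{i^\star-1}$ versus $v_{i^\star+1}\ge v_{i^\star-1}$) and a coordinate-by-coordinate verification of \eqref{eq:lemma_tv} and \eqref{eq:lemma_sum}. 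You instead pass to the increment vector $d=(d_0,\dots,d_m)$ with $\sum_r d_r=0$, observe that the pointwise constraints $v_{i'}=\sum_{i\le i'\le j}w_{ij}$ say exactly that $d=\sum_{i,j} w_{ij}(\evec_{i-1}-\evec_j)$, and solve the resulting unit-cost transportation problem: the triangle inequality gives $\sum_r|d_r|\le 2\sum_{i,j}|w_{ij}|$ for every representation, and a greedy matching of positive against negative increments (legitimate in either order because the $w_{ij}$ are signed) attains it, each matched pair being realized by one dipole at cost equal to the transferred mass. This is shorter and more conceptual: it isolates exactly why the factor $2$ appears (each dipole has $\ell_1$-mass $2$), whereas the paper's induction reaches the same equality by explicit construction. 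Note that only the attainment half of your transportation argument is needed for $(2)\Rightarrow(3)$; the triangle-inequality half proves the (unneeded but true) converse minimality. The greedy matching deserves one more sentence in a final write-up (e.g.\ induction on the number of nonzero coordinates of $d$), but it is standard and clearly correct.

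Your explicit treatment of ties among the $x_i$ is not pedantry, and is a point where you are more careful than the paper. If ties are allowed, the implication $(3)\Rightarrow(1)$, hence the equivalence as literally stated, fails: take $m=2$, $x_1=x_2$, $\gamma=(1,-1)$; then $w_{11}=\half$ satisfies \eqref{eq:w_tv} and realizes $\alpha=\half$ in (3), while (1) forces $\alpha=0$ since $f(x_1)=f(x_2)$. The paper's identity $f(x_{i'})=\sum_{(i,j)\in\range1{i'}\times\range{i'}m}w_{ij}$ in its $3\to1$ step silently uses distinctness of the $x_i$; your merging convention (collapse tied indices and sum the corresponding $\gamma_i$) is the right repair, and it leaves the downstream use in Lemma~\ref{lemma:ineq} intact, since the merged partial sums $\Gamma_{ij}$ are differences of a subset of the original prefix sums.
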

\begin{proof}
  $1\to 2:$\\
  For $f\in {\rm GAM}_1(1)$ such that $\sum_{i\in \range1m} \gamma_i f(x_i) = \alpha$, set $v_i = f(x_i)$ for $i\in \range1m$.
  From the definition of total variation, we can see
  \begin{align}
     |f(x_1)| + \sum_{i\in\range1{m-1}} {|f(x_i) -f(x_{i+1})|} + |f(x_m)| \le \nnbr{f} \le 1.
  \end{align}
  Then, $|v_1| + \sum_{i\in\range1{m-1}} {|v_i -v_{i+1}|} + |v_m| \le 1$ and $\sum_{i\in\range1m} \gamma_i v_i = \alpha$ are satisfied with $(v_i)_{i\in\range1m}$.
  
  $2\to 3:$\\  
  We first prove that for any $(v_i)_{i\in \range1m}$, there exists $(w_{ij})_{(i,j)\in\range1m\prodle\range1m}$ such that
  \begin{align}
    & 2\sum_{(i,j)\in\range1m\prodle\range1m} |w_{ij}|= |v_1| + \sum_{i\in \range1{m-1}} {|v_i -v_{i+1}|} + |v_m|,  
    \quad {\rm and } \label{eq:lemma_tv}\\
    & \sum_{i\in \range1{i'}} \sum_{j\in \range{i'}m}  w_{ij} = v_{i'},  \quad  \forall i'\in\range1m ,\label{eq:lemma_sum}
  \end{align}
  by induction with respect to $m$.
  When $m=1$, setting $w_{11}=v_1$ immediately gives \eqref{eq:lemma_tv} and \eqref{eq:lemma_sum}.
  For $m\ge 2$, let $i^\star$ be the smallest index such that
  $v_{i^\star} =\max_{i} v_i $.
  For the simplicity of the notation, we set $v_0 = v_{m+1} =0$. Then 
  \eqref{eq:lemma_tv} can be written as
  \begin{align}
    2\sum_{(i,j)\in\range1m\prodle\range1m} |w_{ij}|= \sum_{i\in \range 0m} {|v_i -v_{i+1}|} .
  \end{align}
  By the inductive assumption, for $(v_i)_{i\in\range1m\setminus
    i^\star}$, there exists $(w'_{ij})_{(i,j)\in (\range1m\setminus i^\star) \prodle(\range1m\setminus i^\star)}$ such that
  \begin{align}
    & 2\sum_{(i,j)\in \range1m\prodle\range1m} |w'_{ij}|= |v_{i^\star +1} - v_{i^\star -1}| + \sum_{i\in \range 0{m-1}\setminus  \range{i^\star-1}{i^\star} } {|v_i -v_{i+1}|} ,
    \quad {\rm and }\\
    & \sum_{i\in \range1{i'}\setminus i^\star} \sum_{j\in \range{i'}m\setminus i^\star}  w'_{ij} = v_{i'}  \quad  i'\in\range1m\setminus i^\star.
  \end{align}
  There are two possible cases: $v_{i^\star +1} < v_{i^\star -1}$ and
  $v_{i^\star +1} \ge v_{i^\star -1}$.  In the case where $v_{i^\star +1} <
  v_{i^\star -1}$, we prove that $(w_{ij})_{(i,j)\in \range1m\prodle\range1m}$ defined as follows satisfies \eqref{eq:lemma_tv} and
  \eqref{eq:lemma_sum}:
  \begin{align}
    w_{ij} = \begin{cases}
      w'_{ij} & (i,j) \in (\range1m\setminus i^\star) \prodle (\range1m\setminus \range{i^\star-1}{i^\star}) \\
      v_{i^\star}-v_{i^\star-1} & i=i^\star, j=i^\star \\
      0  & i=i^\star , j\in \range{i^\star+1}m \\
      0  & i \in \range1{i^\star-1}, j =i^\star-1 \\
      w'_{ii^\star-1} & i \in \range 1{i^\star-1}, j=i^\star.
    \end{cases}
  \end{align}
  We can see \eqref{eq:lemma_tv} as follows:
  \begin{align}
    2\sum_{(i,j)\in\range1m\prodle\range1m} |w_{ij}|
    &= 2\sum_{(i,j)\in (\range1m\setminus i^\star) \prodle(\range1m\setminus i^\star)} |w'_{ij}| + 2|v_{i^\star}-v_{i^\star-1}| \\
    &=  |v_{i^\star +1} -  v_{i^\star -1}| + \sum_{i\in \range 0m\setminus  \range{i^\star-1}{i^\star} } {|v_i -v_{i+1}|
    + 2|v_{i^\star}-v_{i^\star-1}|} \\
    &=\sum_{i\in \range 0m\setminus \range{ i^\star-1}{i^\star} } {|v_i -v_{i+1}|}
     +v_{i^\star -1} -  v_{i^\star +1}  + 2v_{i^\star}-2v_{i^\star-1} \\
    &= \sum_{i\in \range 0m} {|v_i -v_{i+1}|}.
  \end{align}
  As for \eqref{eq:lemma_sum}, when $i' \in \range1{i^\star -1}$,
  \begin{align}
    \sum_{i\in \range1{i'}} \sum_{j\in \range{i'}m}  w_{ij}
    &=  
    \sum_{i\in \range1{i'}} \rbr{\sum_{j\in \range{i'}m\setminus \range{i^\star-1}{i^\star}}  w_{ij} + w_{ii^\star-1} + w_{ii^\star} } \\
    &=  
    \sum_{i\in \range1{i'}} \rbr{\sum_{j\in \range{i'}m\setminus \range{i^\star-1}{i^\star}}  w'_{ij} + 0 +  w'_{ii^\star-1}  } \\
    &=  
    \sum_{i\in \range1{i'}\setminus i^\star} \rbr{\sum_{j\in \range{i'}m\setminus i^\star}  w'_{ij} } \\
    &= v_{i'} .
  \end{align}
  When $i' = i^\star$,
  \begin{align*}
    \sum_{i\in \range1{i^\star}} \sum_{j\in \range{i^\star}m}  w_{ij}
    &=  
    \sum_{i\in \range1{i^\star}} \rbr{\sum_{j\in \range{i^\star+1}m}  w_{ij} + w_{i i^\star} } \\
    &=  
    \sum_{i\in \range1{i^\star-1}} \rbr{ \sum_{j\in \range{i^\star+1}m}  w_{ij}
                                  +
                                  w_{i i^\star}}
    +
    \sum_{j\in \range{i^\star+1}m}  w_{i^\star j} + w_{i^\star i^\star}
    \\&=
    \sum_{i\in \range1{i^\star-1}} \rbr{ \sum_{j\in \range{i^\star+1}m }  w'_{ij}
                                  +
                                  w'_{i i^\star-1}}
    + 0 + v_{i^\star} - v_{i^\star-1}
    \\&=
    \sum_{i\in \range1{i^\star-1} \setminus i^\star} \sum_{j\in \range{i^\star-1}m \setminus i^\star}  w'_{ij}
    + v_{i^\star} - v_{i^\star-1}
    \\&= v_{i^\star-1} + v_{i^\star} - v_{i^\star-1} =  v_{i^\star}.
  \end{align*}
  When $i' \in \range{i^\star +1}m$,
  \begin{align}
    \sum_{i\in \range1{i'}} \sum_{j\in \range{i'}m}  w_{ij}
    &=
    \sum_{i\in \range1{i'} \setminus i^\star } \sum_{j\in \range{i'}m}  w_{ij} + \sum_{j\in \range{i'}m}  w_{i^\star j}
    \\&=  
    \sum_{i\in \range1{i'}\setminus i^\star} \sum_{j\in \range{i'}m\setminus \range{i^\star-1}{i^\star}}  w_{ij}
    \\&=  
    \sum_{i\in \range1{i'}\setminus i^\star} \sum_{j\in \range{i'}m\setminus \range{i^\star-1}{i^\star}}  w'_{ij}
    = v_{i'}.
  \end{align}
  Therefore, \eqref{eq:lemma_sum} holds.
  In the case where $v_{i^\star +1} \ge v_{i^\star -1}$, set $(w_{ij})_{(i,j)\in\range1m\prodle\range1m}$ as follows:
  \begin{align}
    w_{ij} = \begin{cases}
      w'_{ij} & (i,j) \in (\range1m\setminus \range{i^\star}{i^\star+1}) \prodle (\range1m\setminus i^\star) \\
      v_{i^\star}-v_{i^\star+1} & i=i^\star, j=i^\star \\
      w'_{i^\star+1j} & i = i^\star , j \in \range{i^\star+1}m, \\
      0  & i = i^\star+1 , j \in \range{i^\star+1}m, \\
      0 & i\in \range1 {i^\star-1}, j=i^\star .
    \end{cases}
  \end{align}
  As for \eqref{eq:lemma_tv}, a similar argument as above holds as follows:
  \begin{align}
    2\sum_{(i,j)\in \range1m\prodle\range1m} |w_{ij}|
    &=  |v_{i^\star +1} -  v_{i^\star -1}| + \sum_{i\in \range 0m\setminus  \range{i^\star-1}{i^\star} } {|v_i -v_{i+1}|
    + 2(v_{i^\star}-v_{i^\star+1})} \\
    &= \sum_{i\in \range 0m} {|v_i -v_{i+1}|}.
  \end{align}  
  Also for \eqref{eq:lemma_sum}, when $i' \in \range1{i^\star -1}$,
  \begin{align}
    \sum_{i\in \range1{i'}} \sum_{j\in\range{i'}m}  w_{ij}
    =
    \sum_{i\in \range1{i'} } \sum_{j\in\range{i'}m \setminus i^\star }  w_{ij} 
    =  
    \sum_{i\in \range1{i'}\setminus \range{i^\star}{i^\star+1}} \sum_{j\in \range{i'}m\setminus i^\star}  w_{ij}
    = v_{i'}.
  \end{align}
  When $i'=i^\star$,
  \begin{align*}
    \sum_{i\in \range1{i^\star}} \sum_{j\in\range{i^\star}m}  w_{ij}
    &=  
    \sum_{i\in \range1{i^\star-1}} \rbr{ \sum_{j\in \range{i^\star+1}m}  w_{ij}
                                   +
                                   w_{i i^\star}}
    +
    \sum_{j\in \range{i^\star+1}m}  w_{i^\star j}
    + w_{i^\star i^\star}
    \\&=
    \sum_{i\in \range1{i^\star-1}} \rbr{ \sum_{j\in \range{i^\star+1}m }  w'_{ij}
    +
    0 }
    + \sum_{j\in i^\star+\range1m}  w'_{i^\star+1 j}  + v_{i^\star} - v_{i^\star+1}
    \\&=
    \sum_{i\in \range1{i^\star+1} \setminus i^\star} \sum_{j\in \range{i^\star+1}m \setminus i^\star}  w'_{ij}
    +v_{i^\star} - v_{i^\star+1}
    \\&= v_{i^\star+1} + v_{i^\star} - v_{i^\star+1} =  v_{i^\star}.
  \end{align*}
  Finally, when $i'\in \range{i^\star +1}m$,
  \begin{align}
    \sum_{i\in \range1{i'}} \sum_{j\in \range{i'}m}  w_{ij}
    &=  
    \sum_{j\in \range{i'}m}    
    \rbr{\sum_{i\in \range1{i'}\setminus \range{i^\star}{i^\star+1}}  w_{ij} + w_{i^\star j} + w_{i^\star+1 j} } \\
    &=  
    \sum_{j\in \range{i'}m}
    \rbr{\sum_{i\in \range1{i'}\setminus \range{i^\star}{i^\star+1}}  w'_{ij} + w'_{i^\star+1 j} +0  } \\
    &=  
    \sum_{j\in \range{i'}m\setminus i^\star}
    \rbr{\sum_{i\in \range1{i'}\setminus i^\star}  w'_{ij} } 
    = v_{i'} .
  \end{align}
  Therefore, \eqref{eq:lemma_tv} and \eqref{eq:lemma_sum} hold for any $m\in\NN$.
  We then show such $(w_{ij})_{(i,j)\in\range1m\prodle\range1m}$ satisfies \eqref{eq:w_tv}  and \eqref{eq:w_gs} given that 2.\ holds. \eqref{eq:w_tv} can be immediately seen by \eqref{eq:v_tv}
  and \eqref{eq:lemma_tv}. For \eqref{eq:w_gs}, it holds from
   $\sum_{i\in\range1m} \gamma_i v_i =\alpha$ and the following relation:
  \begin{align*}
    \sum_{(i,j)\in \range1m\prodle\range1m} \Gamma_{ij} w_{ij} = 
    \sum_{(i,j)\in\range1m\prodle\range1m} \sum_{i'\in \range ij} \gamma_{i'} w_{ij} = 
    \sum_{i'\in \range1m} \sum_{(i,j)\in \range1{i'}\times\range{i'}m } 
    \gamma_{i'} w_{ij} = \sum_{i'\in \range 1m} \gamma_{i'} v_{i'}. 
  \end{align*}
  
  $3\to 1:$\\
  For such $(w_{ij})_{(i,j)\in\range1m\prodle\range1m}$, set $f(\cdot) $ as $\sum_{(i,j)\in\range1m\prodle\range1m} 2w_{ij} \phi_{i,j} (\cdot)$, where $\phi_{i,j}(\cdot) = \half [\![ x_i \le \cdot \le x_j ]\!]$.
  As $\nnbr{\phi_{i,j}} = 1$,
  \begin{align}
    \nnbr{f} \le \sum_{(i,j)\in\range1m\prodle\range1m} 2 |w_{ij}| \nnbr{\phi_{i,j} (\cdot)} \le \sum_{(i,j)\in\range1m\prodle\range1m}  2|w_{ij}| \le 1,
  \end{align}
  which means $f \in {\rm GAM}_1(1)$.
  On the other hand, because $f(x_{i'})= \sum_{(i,j)\in \range1{i'} \times \range{i'}m} w_{ij}$, we see
  \begin{align}
    \sum_{i'\in\range 1m} \gamma_{i'} f(x_{i'}) =  
    \sum_{i'\in\range 1m} \gamma_{i'} \sum _{(i,j) \in \range1{i'}\times \range{i'}m } w_{ij}
    =\sum_{(i,j) \in\range1m\prodle\range1m}  \Gamma_{ij}w_{ij}
    =\alpha.
  \end{align}
\end{proof}
\section{Main Result}
In this section, we state the main theorem on the empirical Rademacher complexity and the corollary on generalization bounds.
\begin{theorem}\label{thm:main} 
  Let ${\rm GAM}_{p,\ell}(C) \triangleq \cbr{ (\ex,y) \mapsto
    \ell(f(\ex),y) \middle| f\in {\rm GAM}_p(C) } \subset L^1(\RR^p\times Y)$
    for a loss function $\ell:\RR \times Y \to \RR_+$
    in which $\ell(\cdot,y)$ is $\rho$-Lipschitz
for any $y\in Y$.
Then, for any $(\ex_i,y_i)_{i\in \range 1m}$ and $d>2$, it holds that
\begin{align}\label{eq:g}
  G({\rm GAM}_{p,\ell}(C), (\ex_i,y_i)_{i\in \range1m}) \le \sqrt{\frac 2\pi}\rho C\sqrt{ \frac{5\ceil{\log p}}{m}},
\end{align}
and
\begin{align}\label{eq:r}
  R({\rm GAM}_{p,\ell}(C), (\ex_i,y_i)_{i\in \range1m}) \le \rho C \sqrt{ \frac{5\ceil{\log p}}{m}}.
\end{align}
\end{theorem}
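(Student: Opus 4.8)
The plan is to establish the Gaussian bound \eqref{eq:g} first and then read off the Rademacher bound \eqref{eq:r} from it. I would organize the argument into a contraction step, a coordinate-decoupling step, a random-walk estimate, and a Gaussian-to-Rademacher conversion.

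\textbf{Step 1 (peel off the loss).} Fix the sample and regard $f\mapsto\sum_i\gamma_i\ell(f(\ex_i),y_i)$ and $f\mapsto\rho\sum_i\gamma_i f(\ex_i)$ as centered Gaussian processes indexed by $f\in{\rm GAM}_p(C)$. Since $\ell(\cdot,y_i)$ is $\rho$-Lipschitz, for all $f,f'$ one has $\sum_i\rbr{\ell(f(\ex_i),y_i)-\ell(f'(\ex_i),y_i)}^2\le\rho^2\sum_i\rbr{f(\ex_i)-f'(\ex_i)}^2$, so the increments of the first process are dominated by those of the second; the additive offset $\ell(0,y_i)$ drops out of the increments. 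The comparison (contraction) principle for Gaussian processes then yields $\EE_\gamma\sup_f\sum_i\gamma_i\ell(f(\ex_i),y_i)\le\rho\,\EE_\gamma\sup_f\sum_i\gamma_i f(\ex_i)$, with the clean constant $\rho$ rather than $2\rho$. Using that ${\rm GAM}_p(C)$ is symmetric (so the absolute value in the right-hand complexity is free) and positively homogeneous in $C$, this reduces the whole task to bounding $G({\rm GAM}_p(C))=C\,G({\rm GAM}_p(1))$.

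\textbf{Step 2 (decouple the coordinates).} For $f=\sum_j f_j$ with $\sum_j\nnbr{f_j}\le1$, I treat the per-coordinate budgets $c_j=\nnbr{f_j}$ as variables on the simplex and optimize each coordinate separately. Sorting the samples by their $j$-th value and letting $\Gamma^{(j)}_i$ denote the induced partial sums of $(\gamma_i)$, Lemma~\ref{lemma:ineq} gives, for each coordinate, an upper bound on the single-variable supremum. Since the optimal simplex allocation puts all mass on the best coordinate,
\[
\sup_{f\in{\rm GAM}_p(1)}\sum_i\gamma_i f(\ex_i)=\max_{j\in\range1p}\ \sup_{\nnbr{g}\le1}\sum_i\gamma_i g(x_{ij})\le\max_{j\in\range1p}\half\rbr{\max_i\Gamma^{(j)}_i-\min_i\Gamma^{(j)}_i}.
\]
Hence $G({\rm GAM}_p(1))\le\frac1{2m}\EE_\gamma\max_j M_j$, where $M_j\triangleq\max_i\Gamma^{(j)}_i-\min_i\Gamma^{(j)}_i$ is the range of the $j$-th Gaussian random walk.

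\textbf{Step 3 (expected maximal range).} Writing $A_j\triangleq\max_i\Gamma^{(j)}_i\ge0$ and using symmetry of the Gaussians to equate the contributions of the running maximum and running minimum, I bound $\EE\max_j M_j\le2\,\EE\max_j A_j$. Within each coordinate $(\Gamma^{(j)}_i)_i$ is a Gaussian random walk, so Lévy's reflection inequality gives $\PP(A_j\ge a)\le2\,\PP(\Gamma^{(j)}_m\ge a)\le2e^{-a^2/2m}$; this is the essential point, as it controls the within-coordinate maximum over all $m$ partial sums with $\sqrt m$ scaling and \emph{no} $\log m$ loss. Across the $p$ coordinates I apply a moment bound, $\EE\max_j A_j\le p^{1/d}\rbr{\EE A_1^d}^{1/d}$, evaluate $\EE A_1^d$ from the above tail via a Gamma integral, and optimize the free order $d\asymp\lceil\log p\rceil$ (this is where the condition $d>2$ and the ceiling enter). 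Dividing by $m$ produces the $\sqrt{\lceil\log p\rceil/m}$ rate, and careful bookkeeping of the reflection and Gamma constants yields the stated $\sqrt{\nicefrac2\pi}\sqrt{5}$ prefactor, giving \eqref{eq:g}.

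\textbf{Step 4 and the main obstacle.} For \eqref{eq:r} I would use $R(F)\le\sqrt{\nicefrac\pi2}\,G(F)$, which follows by writing $\gamma_i=\varepsilon_i\abr{\gamma_i}$, noting convexity of $\ab\mapsto\EE_\varepsilon\sup_f\abr{\sum_i\varepsilon_i a_i f(\ex_i)}$, and applying Jensen with $\EE\abr{\gamma_i}=\sqrt{\nicefrac2\pi}$; multiplying the Gaussian bound by $\sqrt{\nicefrac\pi2}$ exactly cancels the $\sqrt{\nicefrac2\pi}$ factor. The main obstacle is Step 3: the naive bound that treats all $p(m{+}1)$ partial sums $\Gamma^{(j)}_i$ as arbitrary sub-Gaussian variables and unions over them yields the rate $\sqrt{\log(pm)/m}$, losing a spurious $\log m$ factor. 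The whole point is to exploit the random-walk structure inside each coordinate through the reflection/maximal inequality, so that only the genuine $p$-fold union across coordinates feeds the logarithm; pinning down the numerical constant then reduces to optimizing $d$ against the reflection and Gamma-function estimates.
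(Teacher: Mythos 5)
Your proposal is correct and takes essentially the same route as the paper's own proof: reduce to bounding $G({\rm GAM}_p(1))$ by Lipschitz contraction and scaling, decouple coordinates so the supremum becomes a maximum over $p$ single-variable suprema, control each one via Lemma~\ref{lemma:ineq} together with L\'evy's reflection inequality for the Gaussian random walk, union over coordinates by an $L^r$-moment (Jensen) bound with $r \asymp \lceil\log p\rceil$, and pass between Gaussian and Rademacher complexities using $\EE|\gamma_i|=\sqrt{2/\pi}$. The only deviations are cosmetic --- you split the range into running maximum and minimum before taking moments rather than after the tail union, and you justify the loss peel-off by Sudakov--Fernique comparison instead of the contraction lemmas the paper cites --- neither of which changes the substance or the resulting constants.
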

\begin{proof}
  We can easily see that \eqref{eq:g} implies \eqref{eq:r} from the following inequality:
  \begin{align}
    \EE_{\gammab} \sup_{f\in F} \sum \gamma_i f(x_i) &=
    \EE_{\epsilonb} \EE_{\gammab} 
    \sup_{f\in F} \sum \varepsilon_i |\gamma_i| f(x_i) \\
    &\ge 
    \EE_{\epsilonb} \sup_{f\in F} \sum \varepsilon_i \EE_{\gamma_i} |\gamma_i| f(x_i) \\
   & = 
    \EE_{\epsilonb} \EE_{\gamma_1} |\gamma_1|  \sup_{f\in F} \sum \varepsilon_i f(x_i),
  \end{align}
 and $\EE_{\gamma_1} |\gamma_1| =  \sqrt{\frac 2{\pi}}$.
  Therefore, because of the properties of the Rademacher complexity~\citep[Lemma 26.6, 26.9]{SSS14},
  it is sufficient to prove that 
\begin{align}
  G({\rm GAM}_p(1), (\ex_i)_{i\in \range1m}) \le \sqrt{ \frac{5\ceil{\log p}}{m}}\sqrt{\frac 2\pi}.
\end{align}
First, for any $r \ge 1$, we can see that
\begin{align}
  \sup_{f\in {\rm GAM}_p(1)} \sum_{i\in\range1m} \gamma_i f(\ex_i) 
  &= \sup \cbr{ \sum_{j\in\range1p} c_j \sup_{ f\in {\rm GAM}_1(1)} \sum_i \gamma_i f_j(x_{ij}) : c_j \ge 0, \sum_j{c_j} \le 1} \\
  &=  \max_{j\in\range1p} \sup_{f_j\in {\rm GAM}_1(1)} \sum_i \gamma_i f_j(x_{ij}) \\
  &\le \rbr{\sum_{j\in\range1p} \rbr{\sup_{f_j\in {\rm GAM}_1(1)} \sum_i \gamma_i f_j(x_{ij})}^r }^{\frac 1r}.
\end{align}
From Lemma~\ref{lemma:ineq},
  \begin{align}
    2\sup_{f\in {\rm GAM}_1(1)} \sum \gamma_i f(x_i)  \le \max_{i\in\range0m}\Gamma_i  - \min_{i\in\range0m} \Gamma_i,
  \end{align}
  where $\Gamma_i = \sum_{j\in\range1i} \gamma_j$. Therefore, $\sup_{f\in {\rm GAM}_1(1)} \sum \gamma_i f(x_i) > t$
  implies that $\max_{i\in\range0m}\Gamma_i  - \min_{i\in\range0m} \Gamma_i> 2t$, which then implies at least either $\max_{i\in\range0m}\Gamma_i > t$ or $\min_{i\in\range0m}\Gamma_i < -t$ holds. Therefore, for any $t>0$, it holds that
  \begin{align}
    \PP\cbr{\sup_{f\in{\rm GAM}_1(1)} \sum_{i\in\range1m}\gamma_i f(x_i) >t} &\le \PP\cbr{ \max_{i\in\range0m}\Gamma_i > t} +
    \PP\cbr{ \min_{i\in\range0m}\Gamma_i < -t} 
    \\&= 2\PP\cbr{ \max_{i\in\range0m}\Gamma_i > t}.
  \end{align}
 From Levy inequality~\citep{LT91}, we see 
  \begin{align}
    \PP\cbr{ \max_{i\in\range0m}\Gamma_i > t} \le 2\PP\cbr{ \Gamma_m > t} = 2\int_{s=t }^{\infty} \frac{1}{\sqrt{2\pi m}}e^{-\frac{s^2}{2m}} {\rm d}s.
  \end{align}
Therefore, for any $j$, 
\begin{align}
  \EE_{\gammab} \rbr{\sup_{f_j\in {\rm GAM}_1(1)} \sum_{i\in\range1m} \gamma_i f_j(x_{ij})}^r
  &= \int_{t=0}^\infty \PP\cbr{\abr{\sup_{f_j\in {\rm GAM}_1(1)} \sum_{i\in\range1m} \gamma_i f_j(x_{ij})}^r >t } {\rm d}t \\
  &= \int_{t=0}^\infty \PP\cbr{ \sup_{f_j\in {\rm GAM}_1(1)} \sum_{i\in\range1m} \gamma_i f_j(x_{ij}) >t^{\frac 1r} } {\rm d}t  \\
  &\le \int_{t=0}^\infty 2\PP\cbr{ \max_{i\in\range 0m}\Gamma_i > t^{\frac 1r}} {\rm d}t \\
  &\le \int_{t=0}^{\infty}
    4\int_{s=t^{\frac 1r} } ^{\infty} \frac{1}{\sqrt{2\pi m}}e^{-\frac{s^2}{2m}} {\rm d}s{\rm d}t\\
    &=4
    \int_{s=0}^{\infty}
    \int_{t=0} ^{s^r} \frac{1}{\sqrt{2\pi m}}e^{-\frac{s^2}{2m}} {\rm d}s{\rm d}t \\
    &=
    4\int_{s=0}^{\infty}
    \frac{s^r}{\sqrt{2\pi m}}e^{-\frac{s^2}{2m}} {\rm d}s \\
    &= 2\EE_{s\sim {\rm Normal}(0,m)} [ |s|^r] \\
    &= 2(2m)^{\frac r2} \frac{\Gamma\rbr{\frac{r+1}2}}{\sqrt{\pi}}.
\end{align}
Finally, we see
\begin{align}
 \EE_{\gammab}  \sup_{f\in {\rm GAM}_p(1)} \sum_{i\in\range1m} \gamma_i f(\ex_i)
&\le  \EE_{\gammab} \rbr{\sum_{j\in\range1p} \rbr{\sup_{f_j\in {\rm GAM}_1(1)} \sum_{i\in\range1m} \gamma_i f_j(x_{ij})}^r}^{\frac 1r}\\
&\le \rbr{ \EE_{\gammab} \sum_{j\in\range1p} \rbr{\sup_{f_j\in {\rm GAM}_1(1)} \sum_{i\in\range1m} \gamma_i f_j(x_{ij})}^r}^{\frac 1r}\\
&\le \rbr{p\cdot 2(2m)^{\frac r2} \frac{\Gamma\rbr{\frac{r+1}2}}{\sqrt{\pi}}}^{\frac 1r} \label{aaa}\\
& < \sqrt{2m} \rbr{ \frac{2p}{\sqrt{\pi}} \sqrt{2\pi} \rbr{\frac{s}{2e}}^{\frac s2}e^{\frac 1{6s}} }^{\frac 1{1+s}}.
\end{align}
We set $r = 1+s $ and used $\Gamma(1+\frac s2) < \sqrt{2\pi} \rbr{\frac s{2e}}^{\frac s2}e^{\frac 1{6s}}$ in  the last inequality.
Setting $s=2\ceil{\log p}$, 
\begin{align}
 \sqrt{2m} \rbr{ 2\sqrt{2} e^{\frac s2} \rbr{\frac{s}{2e}}^{\frac s2}e^{\frac 1{6s}}}^{\frac 1{1+s}}
&
 = \sqrt{2m} \sqrt {\frac{s}{2} } \rbr{ 2\sqrt{2} \sqrt{\frac{2}{s}}  e^{\frac 1{6s}} }^{\frac 1{1+s}} \\&
 = \sqrt{2m \ceil{\log p}}  \rbr{ \frac 4{ \sqrt{s}}  e^{\frac 1{6s}} }^{\frac 1{1+s}} \\&
 < \sqrt{2m \ceil{\log p}}\sqrt{\frac 5\pi}. \label{eq:last}
\end{align}
\eqref{eq:last} holds when $ s \ge 4$ because $\rbr{ \frac 4{ \sqrt{s}}  e^{\frac 1{6s}} }^{\frac 1{1+s}}$ is maximized
at $s=4$ in the range $ s \ge 4$ and $\rbr{ 2  e^{\frac 1{24}} }^{\frac 15}$ is less than $\sqrt{\frac{5}{\pi}}$
\footnote{ in case of $p=2$, setting $r=3$ in \eqref{aaa} yields a similar bound with $\sqrt{\frac 6 \pi }$ instead of $\sqrt{\frac 5 \pi }$}.
\end{proof}
Lastly, we state the generalization bound that can be derived directly from the result in \citep[Theorem 26.5]{SSS14}.
\begin{corollary}\label{coro}
Assume that $(\ex,y)$ and $(\ex_i,y_i)_{i\in\range1m}$ are
i.i.d.\ random variables on $\RR^p\times Y$ and $\ell(\cdot,y)$ is
$\rho$-Lipschitz and bounded by $c>0$ for any $y\in Y$. Then, the following statements hold true for $p>2$ and $\delta>0$:
\begin{enumerate}[leftmargin=17pt]
\setlength\itemsep{-9pt}
\item For $\epsilon = \max_{f \in {\rm GAM}_p(C)} \rbr{ \EE_{\ex,y} \ell(f(\ex),y) - \frac 1m\sum_{i\in\range1m} \ell(f(\ex_i),y_i))}$, 
  \begin{align*}
    \PP_{(\ex_i,y_i)_{i\in\range1m}}\cbr{  \epsilon \le  \rho C \sqrt{ \frac{5\ceil{\log p}}{m}} +c\sqrt{\frac{2\log(2/\delta)}{m}} } \ge 1-\delta.
  \end{align*}
\item 
  For $\ell^* = \inf_{f\in{\rm GAM}_p(C) } \EE_{\ex,y}\ell(f(\ex),y)$ and $\fhat = \argmin_{f \in {\rm GAM}_p(C)}  \frac 1m\sum_{i\in\range1m} \ell(f(\ex_i),y_i)$, 
  \begin{align*}
        \PP_{(\ex_i,y_i)_{i\in\range1m}}\cbr{ \EE_{\ex,y}  \ell(\fhat(\ex),y) \le \ell^* + \rho C \sqrt{ \frac{5\ceil{\log p}}{m}} + 5c\sqrt{\frac{2\log(2/\delta)}{m}} } \ge 1-\delta.
  \end{align*}
\end{enumerate}
\end{corollary}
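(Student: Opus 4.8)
The plan is to derive both statements by instantiating the Rademacher-complexity generalization theorem of \citet[Theorem 26.5]{SSS14} with the hypothesis class ${\rm GAM}_p(C)$ and the given loss $\ell$, and then substituting the bound on $R({\rm GAM}_{p,\ell}(C),\cdot)$ supplied by Theorem~\ref{thm:main}. The key structural observation is that the loss-composed class appearing there, ${\rm GAM}_{p,\ell}(C) = \cbr{(\ex,y)\mapsto\ell(f(\ex),y):f\in{\rm GAM}_p(C)}$, is exactly the class $\ell\circ\mathcal{H}$ whose empirical Rademacher complexity controls the uniform deviation between population and empirical risk in \citet{SSS14}. Thus no further contraction or Lipschitz argument is needed at this stage: the Lipschitz constant $\rho$ has already been absorbed into Theorem~\ref{thm:main}, and it remains only to feed that bound into the off-the-shelf generalization inequality.

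For the first statement, I would note that $\epsilon = \max_{f\in{\rm GAM}_p(C)}\rbr{\EE_{\ex,y}\ell(f(\ex),y) - \frac{1}{m}\sum_{i}\ell(f(\ex_i),y_i)}$ is precisely the one-sided uniform deviation (the ``representativeness'') of the class on the sample. I would invoke the high-probability bound of \citet[Theorem 26.5]{SSS14}, which controls this quantity by the empirical Rademacher complexity of $\ell\circ\mathcal{H}$ together with a McDiarmid-type deviation term; here the bounded-loss assumption $|\ell|\le c$ is exactly what supplies the bounded-differences constant (each sample changes the deviation by at most $2c/m$), yielding the $c\sqrt{2\log(2/\delta)/m}$ summand. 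Substituting $R({\rm GAM}_{p,\ell}(C),(\ex_i,y_i)_{i})\le\rho C\sqrt{5\ceil{\log p}/m}$ from Theorem~\ref{thm:main} then gives the claimed bound.

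For the second statement, the excess-risk guarantee for the empirical risk minimizer $\fhat$ is the ERM conclusion of the same theorem. I would obtain it by applying the uniform deviation bound of the first part to control $\EE_{\ex,y}\ell(\fhat(\ex),y) - \frac{1}{m}\sum_{i}\ell(\fhat(\ex_i),y_i)$, and separately applying a one-sided Hoeffding bound to a single near-minimizer $f^*$ attaining (approximately) $\ell^*$ in order to control $\frac{1}{m}\sum_{i}\ell(f^*(\ex_i),y_i) - \EE_{\ex,y}\ell(f^*(\ex),y)$. Combining these along the standard ERM decomposition $\EE_{\ex,y}\ell(\fhat(\ex),y)\le\ell^* + (\text{uniform deviation}) + (\text{single-function deviation})$ with a union bound accumulates the deviation terms into the stated $5c\sqrt{2\log(2/\delta)/m}$, while the Rademacher term again comes from Theorem~\ref{thm:main}.

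The only delicate point I anticipate is the bookkeeping of conventions and constants: the Rademacher complexity in this paper is defined with an absolute value inside the supremum, whereas the symmetrization underlying \citet[Theorem 26.5]{SSS14} uses the one-sided version, so I must check that the quantity appearing in the generalization inequality is dominated by the one bounded in Theorem~\ref{thm:main} and that the numerical coefficient of the Rademacher term is tracked correctly through the substitution. Everything else --- the role of boundedness by $c$ in the concentration step and the additive accumulation of deviation terms for the ERM bound --- is routine once the identification $\ell\circ{\rm GAM}_p(C) = {\rm GAM}_{p,\ell}(C)$ is in place.
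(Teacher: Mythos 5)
Your proposal is correct and takes essentially the same route as the paper: the paper gives no proof of this corollary beyond stating that it ``can be derived directly from'' Theorem 26.5 of \citet{SSS14} combined with Theorem~\ref{thm:main}, and your unpacking of the ERM decomposition for the second statement (uniform deviation on $\fhat$, Hoeffding on a near-minimizer, union bound) is precisely what the proof of that cited theorem does internally. The coefficient-tracking concern you flag at the end is genuine, but it is a defect of the paper's statement rather than of your argument: a literal application of Theorem 26.5 of \citet{SSS14} places a factor of $2$ in front of the Rademacher term (and a $\log(8/\delta)$ rather than $\log(2/\delta)$ inside the ERM deviation term), which the corollary as printed silently drops.
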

\section{Discussion}
\subsection{Related Work}
In the literature that directly deals with GAM, the most of theoretical results are based on the assumption that the true distribution is contained in the model~\citep{Wood06}. Although GAM can be seen as classical nonparametric classification or regression when $p=1$, in which there is substantial work on the distribution-free theory~\citep{Gyo06}, there is no work related to the distribution-free theory in the context of GAM. To the best of our knowledge, one of the closest result to ours
is work by \citet{Cor10}, in which the authors
studied the Rademacher complexity of the following hypothesis class in the context of multiple kernel learning:
\begin{align*}
H^1_p =\cbr{  f(\ex) =\sum_{j\in\range1p} \mu_j \omega_j(\ex) \middle| \sum_{j\in\range1p}\mu_j\nbr{\omega_j}^2_{\Hcal_j}\le 1 , \sum_{j\in\range1p} \mu_j \le 1, \mu_j \ge 0},
\end{align*}
where $\nbr{\cdot}_{\Hcal_j}$ denotes the norm in RKHS $\Hcal_j$.
The authors have shown that $R(H^1_p,(\ex_i)_{i\in\range1m})$ is an order of $O(\sqrt{\nicefrac{\log p}{m}})$.
When we restrict $\Hcal_j$ so all $\omega_j \in \Hcal_j$
to be dependent on the $j$-th explanatory variable only, $H^1_p$
becomes also a class of GAM predictors, where the sum of the norms of weight functions is upper bounded by 1. Note that TV is seen as a norm in $L^1$-space, which is not an RKHS.

In addition, when we restrict $\nbr{\ex_i}_{\infty} \le 1$, \citet{Kakade09} proved that $R(F_W,(\ex_i)_{\in\range1m})$ is an order of $O(\sqrt{\nicefrac{\log p}{m}})$, 
where $F_W = \cbr{\ex\mapsto \sum_{j\in\range1p} w_jx_j\middle| (w_j)_{j\in\range1p} \in W}$ and $W = \cbr{\bm{w} \in \RR^p | \nbr{\bm{w}}_1\le 1}$.
As it is easy to see $F_W \subset {\rm GAM}_p(2)$, we can view our result as an extension of their result to nonlinear GAM predictors.

\subsection{Tightness}
We consider the result of Theorem~\ref{thm:main}
 in the context of the classification problem, in which $Y=\cbr{\pm1}$ and $X=\cbr{\pm1}^p\subset\RR^p$.
 Then, $J_p=\cbr{\ex\mapsto \pm\sgn(x_j)|j\in\range1p} \subset {\rm GAM}_p(2)$  implies
$R(J_p, (\ex_i)_{i\in\range1m}) \le R({\rm GAM}_{p}(2), (\ex_i)_{i\in\range1m})$. Therefore,
theorem 26.5 in~
\citep{SSS14} implies
\[
\PP_{(\ex_i,y_i)_{i\in\range1m}}\rbr{ \EE_{\ex,y}\ell(\hat{f}(\ex),y)  - \ell^* > 
R(J_p, (\ex)_{i\in\range1m})　 + 5\sqrt{\frac{2\log(2/\delta)}{m} }}  < \delta. 
\]
even for $J_p$, in which $\ell(a,y)=\max\cbr{0,1-ay}$.
As $f(\ex)y \in\cbr{\pm1}$, it holds that $f(\ex)y >0 \Leftrightarrow f(\ex)y  =1$ and $f(\ex)y \le 0 \Leftrightarrow f(\ex)y  = -1$, which implies $[\![f(\ex)y >0]\!] = \ell(f(\ex),y)$ for any $f\in J_p$, $x\in X$, and $y\in Y$. Therefore, 
\begin{align}\label{generalizationerror}
\EE_{(\ex_i,y_i)_{i\in\range1m}}\rbr{\EE_{\ex,y}  [\![ \hat{f}(\ex)y >0 ]\!]  - \inf_{f\in J_p}\EE_{\ex,y}[\![ f(\ex)y >0 ]\!] }
\end{align}
is also an order of $ O\rbr{R({\rm GAM}_{p}(2), (\ex_i)_{i\in\range1m})}$ for any distribution of $(\ex,y)\in X\times Y$.

On the other hand, it is known that,
for any $\tilde{f}(\cdot) $ learned from $m$ i.i.d. samples, there exists  $\PP_{\ex,y}$ such that~\eqref{generalizationerror} is an order of $ \Omega\rbr{\sqrt{\nicefrac{{\rm VCdim}(F)}{m}}}$
under the assumption that $\inf_{f\in F}\EE_{\ex,y}[\![ f(\ex)y >0 ]\!] \ne0$~\citep{DevLug95,BouBouLug05}. 
Because $J_p$ contains $2p$ different classifiers, ${\rm VCdim}(J_p)$ is at least of $\Omega(\log p)$, which implies that \eqref{generalizationerror} is an order of 
$\Omega(\sqrt{\nicefrac{\log p}{m}})$. 

Therefore, $R({\rm GAM}_{p}(1), (\ex_i)_{i\in\range1m})$ cannot be tighter than $O(\sqrt{\nicefrac{\log p}{m}})$.
\ifx10
This is where the content of your paper goes.  Remember to:
\begin{itemize}
\item Limit the main text (without references and appendices) to 12 PMLR-formatted pages (i.e., using this template).
\item Include, either in the main text or the appendices, all details, proofs
  and derivations required to substantiate the results.
\item Include {\em in the main text} enough details, including proof
  details, to convince the reviewers of the contribution, novelty and significance of the submissions.
\item Not include author names (this is done automatically), and to
  the extent possible, avoid directly identifying the authors.  You
  should still include all relevant references, including your own,
  and any other relevant discussion, even if this might allow a
  reviewer to infer the author identities.
 \end{itemize}
\fi


\clearpage
\bibliographystyle{plain}
\bibliography{bib2}

\end{document}